\newtheorem{lemma}{Lemma}
\newtheorem{theorem}{Theorem}
\newtheorem{definition}{Definition}
\newtheorem{example}{Example}
\newcommand{\ZO}{\{0,1\}}
\newcommand{\varl}[2]{\textup{var}_{#1}\left(#2\right)}
\newcommand{\inv}[1]{#1^{\text{ }\scalebox{0.8}[0.75]{\textup{-1}}}}
\newcommand{\card}[1]{\vert #1 \vert}
\newcommand{\OBDD}{\textsf{OBDD}}
\newcommand{\BDD}{\textsf{BDD}}
\newcommand{\NNF}{\textsf{NNF}}
\newcommand{\DNNF}{\textsf{DNNF}}
\newcommand{\CNF}{\textsf{CNF}}
\newcommand{\AND}{\textsf{AND}}
\newcommand{\OR}{\textsf{OR}}
\begin{document}

\title{A Lower Bound on DNNF Encodings of Pseudo-Boolean Constraints}

\author{
	Alexis de Colnet
}

\affil{
	CRIL, CNRS \& Univ Artois
}
\date{ }
\maketitle

\begin{abstract}
Two major considerations when encoding pseudo-Boolean (PB) constraints into SAT are the size of the encoding and its propagation strength, that is, the guarantee that it has a good behaviour under unit propagation. Several encodings with propagation strength guarantees rely upon prior compilation of the constraints into {\DNNF} (decomposable negation normal form), {\BDD} (binary decision diagram), or some other sub-variants. However it has been shown that there exist PB-constraints whose ordered {\BDD} ({\OBDD}) representations, and thus the inferred {\CNF} encodings, all have exponential size. Since {\DNNF}s are more succinct than {\OBDD}s, preferring encodings via {\DNNF} to avoid size explosion seems a legitimate choice. Yet in this paper, we prove the existence of PB-constraints whose {\DNNF}s all require exponential size. 
\end{abstract}

\section{Introduction}
\label{section:introduction}
\par 
Pseudo-Boolean constraints (or PB-constraints) are Boolean functions over $0/1$ Boolean variables $x_1,\dots,x_n$ of the form $\sum_{i = 1}^n w_ix_i\text{ }\texttt{'op'}\text{ }\theta$ where the $w_i$ are integer weights,~$\theta$ is an integer threshold and $\texttt{'op'}$ is a comparison operator $<$, $\leq$, $>$ or $\geq$. PB-constraints have been studied extensively under different names (e.g. threshold functions \cite{HosakaTKY97}, Knapsack constraints \cite{GopalanKMSVV11}) due to their omnipresence in many domains of AI and their wide range of practical applications \cite{Ivuanescu65,Papaioannou77,BorosHMR99,BryantLS02,AloulRMS02}.
\par
One way to handle PB-constraints in a constraint satisfaction problem is to translate them into a {\CNF} formula and feed it to a SAT solver. The general idea is to generate a {\CNF}, possibly introducing auxiliary Boolean variables, whose restriction to variables of the constraint is equivalent to the constraint. Two major considerations here are the size of the {\CNF} encoding and its propagation strength. One wants, on the one hand, to avoid the size of the encoding to explode, and on the other hand, to guarantee a good behaviour of the SAT instance under unit propagation -- a technique at the very core of SAT solving. Desired propagation strength properties are, for instance, generalised arc consistency (GAC) \cite{Bacchus07} or propagation completeness (PC) \cite{BordeauxM12}. Several encodings to {\CNF} follow the same two-steps method: first, each constraint is represented in a compact form such as {\BDD} (Binary Decision Diagram) or {\DNNF} (Decomposable Negation Normal Form). Second, the compact forms are turned into {\CNF}s using Tseitin or other transformations. The SAT instance is the conjunction of all obtained {\CNF}s. It is worth mentioning that there are GAC encodings of PB-constraints into polynomial size {\CNF}s that do not follow this two-steps method~\cite{BailleuxBR09}. However no similar result is known for PC encodings. PC encodings are more restrictive that GAC encodings and may be obtained via techniques requiring compilation to {\DNNF}~\cite{KuceraS19}. Thus the first step is a knowledge compilation task.
\par 
Knowledge compilation studies different representations for knowledge \cite{DarwicheM02,Marquis15} under the general idea that some representations are more suitable than others when solving specific reasoning problems. One observation that has been made is that the more reasoning tasks can be solved efficiently with particular representations, the larger these representations get in size. In the context of constraint encodings to SAT, the conversion of compiled forms to {\CNF}s does not reduce the size of the SAT instance, therefore it is essential to control the size of the representations obtained by knowledge compilation.
\par
Several representations have been studied with respect to different encoding techniques with the purpose of determining which properties of representations are sufficient to ensure propagation strength \cite{EenS06 ,JungBKW08, GangeS12, AbioNORM14, AbioGMS16, KuceraS19}. Popular representations in this context are {\DNNF} and {\BDD} and their many variants: deterministic {\DNNF}, smooth {\DNNF}, ordered {\BDD} ({\OBDD})$\dots$ As mentioned above, a problem occurring when compiling a constraint into such representations is that exponential space may be required. Most notably, it has been shown in \cite{HosakaTKY97,AbioNORM14} that some PB-constraints can only be represented by {\OBDD}s whose size is exponential in $\sqrt{n}$, where $n$ is the number of variables. Our contribution is the proof of the following theorem where we lift the statement from {\OBDD} to {\DNNF}.
\begin{theorem}\label{theorem:PB_constraints_hard_for_DNNF}
There is a class of PB-constraints $\mathcal{F}$ such that for any constraint $f \in \mathcal{F}$ on $n^2$ variables, any {\DNNF} representation of $f$ has size $2^{\Omega(n)}$.
\end{theorem}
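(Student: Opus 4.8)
The plan is to route the argument through the standard bridge between \DNNF\ size and communication-style rectangle covers, and then to defeat \emph{every} balanced partition simultaneously using a carefully weighted grid. First I would invoke the result of Bova, Capelli, Mengel and Slivovsky that a \DNNF\ of size $s$ computing a function $f$ on a variable set $V$ yields a cover of $f^{-1}(1)$ by at most $s$ combinatorial rectangles, where each rectangle respects its \emph{own} balanced partition of $V$ (each side receiving between $|V|/3$ and $2|V|/3$ of the variables). Since $|V| = n^2$, it then suffices to prove that for the constraints in $\mathcal{F}$, every balanced rectangle cover of the set of models needs $2^{\Omega(n)}$ rectangles. Note that this is strictly stronger than the \OBDD\ lower bound: there one only fights a single linear variable order (hence a single, sweeping family of partitions), whereas here the adversary may choose a different balanced partition for each rectangle.

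The second step exploits the special geometry of rectangles inside a pseudo-Boolean constraint. For a constraint $\sum_{v} w_v x_v \ge \theta$ and a partition $(Y,Z)$, writing $W_Y(a)$ and $W_Z(b)$ for the partial weights of the two parts, a combinatorial rectangle $A \times B$ lies inside $f^{-1}(1)$ if and only if $\min_{a \in A} W_Y(a) + \min_{b \in B} W_Z(b) \ge \theta$; so each rectangle is pinned down by a single pair of weight thresholds. I would restrict attention to a set $S$ of \emph{tight} models, all of weight exactly $\theta$. Then two tight models $u,v$ can share a rectangle of $f^{-1}(1)$ under $(Y,Z)$ only if $W_Y(u) = W_Y(v)$, i.e.\ only if they are indistinguishable by their $Y$-weight: otherwise one of the two ``crossed'' assignments has $Y$-weight strictly below its partner's and, since the two crossed weights sum to $2\theta$, it drops below $\theta$ and breaks the constraint. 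Hence one balanced rectangle can absorb at most as many elements of $S$ as there are tight models sharing a common $Y$-weight, and the whole problem reduces to showing that, for every balanced $(Y,Z)$, the tight models of $S$ scatter into $2^{\Omega(n)}$ distinct $Y$-weight classes.

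The third step is the design of $\mathcal{F}$ and of $S$. I would lay the weights on the $n \times n$ grid so that there are $\Theta(n)$ local ``gadgets'', each offering two options of \emph{equal} weight; a choice vector $\epsilon \in \{0,1\}^n$ then yields a tight model $u_\epsilon$ of weight $\theta = \sum_g r_g$, giving $|S| = 2^{\Omega(n)}$. The weights across cells would be taken from a dissociated (Sidon-type) set, so that no accidental cancellation of partial sums occurs across gadgets. With such weights, $u_\epsilon$ and $u_\delta$ fall in a common rectangle under $(Y,Z)$ only when every gadget on which $\epsilon$ and $\delta$ disagree is \emph{monochromatic} (its cells lie entirely on one side): a gadget that is split by $(Y,Z)$ contributes a nonzero, non-cancelling term to $W_Y(u_\epsilon) - W_Y(u_\delta)$. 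Consequently a rectangle over $(Y,Z)$ holds only models agreeing on all split gadgets, at most $2^{n - \#\mathrm{split}(Y,Z)}$ of them, and if every balanced partition splits $\Omega(n)$ gadgets we conclude $s \ge |S| / 2^{n - \Omega(n)} = 2^{\Omega(n)}$.

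\textbf{The main obstacle} is precisely the last clause: forcing every balanced partition to split $\Omega(n)$ gadgets, and doing so while keeping exponentially many tight models. These two demands pull against each other. Many tight models require genuine weight degeneracy (the two options of a gadget must have equal weight), yet too much degeneracy lets distinct tight models differ only on one side of some balanced cut, destroying the fooling property. Moreover, if the gadgets occupy \emph{disjoint} blocks of cells, an adversary can balance the partition while keeping every block on a single side, leaving no gadget split; the same defeat occurs for gadgets aligned with rows, columns, or any single parallel class, since these tile the grid and a balanced cut can respect the tiles. Overcoming this is where the genuinely two-dimensional, expander-like layout of the weights must enter: the gadgets have to be interleaved on the grid so that keeping all of them monochromatic forces the partition to be essentially trivial, whence balance compels $\Omega(n)$ of them to be split. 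I expect the heart of the proof to be this combinatorial ``un-confinability'' lemma for the $n \times n$ grid, married to a weight assignment that is locally degenerate but globally dissociated enough to turn each split gadget into a distinct $Y$-weight.
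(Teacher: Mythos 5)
Your first two steps coincide with the paper's: the reduction from {\DNNF} size to balanced rectangle covers is exactly Theorem~\ref{theorem:size_rectangle_cover_size_dnnf}, and your observation that a rectangle $r\le f$ containing tight models forces a single split $\theta=\theta_1+\theta_2$ of the $Y$- and $Z$-weights (via the crossed assignments) is precisely the paper's Lemma~\ref{lemma:rectangle_cover_lower_bound_threshold_models}. The gap is in your third step, and it is the whole of the combinatorial content. You never exhibit the constraint, and the framework you commit to --- $\Theta(n)$ local gadgets, each a binary choice between two equal-weight options, giving a product set $S\cong\{0,1\}^n$ of tight models --- cannot be completed. You need every balanced partition of the $n^2$ variables to split $\Omega(n)$ gadgets, but $n$ disjoint gadgets occupy tiles in a grid of $n^2$ cells, and as long as no tile exceeds $n^2/3$ cells a greedy assignment of whole tiles to the two sides reaches any prescribed balance while splitting \emph{no} gadget; making the gadgets overlap destroys the product structure (and the equal-weight degeneracy) that gave you $|S|=2^{\Omega(n)}$ in the first place. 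You correctly flag this tension as ``the main obstacle,'' but an honest assessment is that the un-confinability lemma you hope for is false in your framework, not merely missing.

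The paper escapes by abandoning independent gadgets altogether. It takes $w_{i,j}=(2^i+2^{j+n})/2$ and $\theta=2^{2n}-1$, so that the $n!$ permutation matrices (maximal matchings of $K_{n,n}$) are tight models; a closure lemma (Lemma~\ref{lemma:rectangle_property_of_maximal_matching}: crossing two maximal matchings inside a weight-$\theta$ rectangle again yields maximal matchings) lets one restrict any cover to the function $\hat f=\text{PERM}_n$; and the classical rectangle bound for $\text{PERM}_n$ does the rest --- a rectangle $\rho_1\wedge\rho_2$ inside $\hat f$ forces disjoint vertex sets $U_1,V_1$ and $U_2,V_2$ with $\card{U_1}=\card{V_1}=k$, hence at most $k!(n-k)!$ models, and balance of the \emph{edge} partition confines $k$ to $[n(1-\sqrt{2/3}),\,n\sqrt{2/3}]$, giving $C(\hat f)\ge\binom{n}{n\sqrt{2/3}}=2^{\Omega(n)}$. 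The point you are missing is that the set of tight models must carry a global combinatorial structure (here, permutations) whose rectangles are constrained for \emph{every} partition; local weight degeneracy plus a dissociated global scale, as you propose, does not supply this. Note also that even granting a construction, your step two only restricts the cover to \emph{all} tight models; to count only the models of your chosen $S$ you would additionally need $S$ to be closed under the crossing operation inside each rectangle, the analogue of Lemma~\ref{lemma:rectangle_cover_lower_bound_maximal_matching_assignments}, which your sketch does not address.
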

Since {\DNNF}s are exponentially more succinct than {\OBDD}s \cite{DarwicheM02}, our result is a generalisation of the result in \cite{HosakaTKY97,AbioNORM14}. The class $\mathcal{F}$ is similar to that used in \cite{HosakaTKY97,AbioNORM14}, actually the only difference is the choice of the threshold for the PB-constraints. Yet, adapting proofs given in \cite{HosakaTKY97, AbioNORM14} for {\OBDD} to {\DNNF} is  not straightforward, thus our proof of Theorem~\ref{theorem:PB_constraints_hard_for_DNNF} bears very little resemblance.
\par
It has been shown in \cite{BerreMMW18} that there exist sets of PB-constraints such that the whole \emph{set} (so a conjunction of PB-constraints) requires exponential size {\DNNF} to represent. Our result is a generalisation to \emph{single} PB-constraints.

\section{Preliminaries}
\label{section:preliminaries}
\paragraph{\textbf{\textup{Conventions of notation.}}} Boolean variables are seen as variables over $\ZO$, where $0$ and $1$ represent $false$ and $true$ respectively. Via this $0/1$ representation, Boolean variables can be used in arithmetic expressions over $\mathbb{Z}$. For notational convenience, we keep the usual operators $\neg$, $\vee$ and $\wedge$ to denote, respectively, the negation, disjunction and conjunction of Boolean variables or functions. Given $X$ a set of $n$ Boolean variables, assignments to $X$ are seen as vectors in $\ZO^n$. Single Boolean variables are written in plain text ($x$) while assignments to several variables are written in bold ($\mathbf{x}$). We write $\mathbf{x} \leq \mathbf{y}$ when the vector $\mathbf{y}$ dominates $\mathbf{x}$ element-wise. We write $\mathbf{x} < \mathbf{y}$ when $\mathbf{x} \leq \mathbf{y}$ and $\mathbf{x} \neq \mathbf{y}$. In this framework, a Boolean function $f$ over $X$ is a mapping from $\ZO^n$ to $\ZO$. $f$ is said to \emph{accept} an assignment $\mathbf{x}$ when $f(\mathbf{x}) = 1$, then $\mathbf{x}$ is called a \emph{model} of $f$. The function is \emph{monotone} if for any model $\mathbf{x}$ of $f$, all $\mathbf{y} \geq \mathbf{x}$ are models of $f$ as well. The set of models of $f$ is denoted $\inv{f}(1)$. Given $f$ and $g$ two Boolean functions over $X$, we write $f \leq g$  when $\inv{f}(1) \subseteq \inv{g}(1)$. We write $f < g$ when the inclusion is strict. 

\paragraph{\textbf{\textup{Pseudo-Boolean constraints.}}}  \emph{Pseudo-Boolean (PB) constraints} are inequalities the form $\sum_{i=1}^n w_i x_i \text{ }\texttt{'op'} \text{ } \theta$ where the $x_i$ are $0/1$ Boolean variables, the~$w_i$ and $\theta$ are integers, and $\texttt{'op'}$ is one of the comparison operator $<$, $\leq$, $>$ or~$\geq$. A PB-constraint is associated with a Boolean function whose models are exactly the assignments to $\{x_1, \dots, x_n\}$ that satisfy the inequality. For simplicity we directly consider PB-constraints as Boolean functions -- although the same function may represent different constraints -- while keeping the term ``constraints'' when referring to them. In this paper, we restrict our attention to PB-constraints where $\texttt{'op'}$ is $\geq$ and all weights are positive integers. Note that such PB-constraints are monotone Boolean functions. Given a sequence of positive integer weights $W = (w_1, \dots, w_n)$ and an integer threshold $\theta$, we define the function $w : \ZO^n \rightarrow \mathbb{N}$ that maps any assignment to its weight by $w(\mathbf{x}) = \sum_{i=1}^n w_i x_i$. With these notations, a PB-constraint over $X$ for a given pair $(W,\theta)$ is a Boolean function whose models are exactly the $\mathbf{x}$ such that $w(\mathbf{x}) \geq \theta$.\begin{example}\label{example:PB_constraint_example}
Let $n = 5$, $W = (1,2,3,4,5)$ and $\theta = 9$. The PB-constraint for $(W,\theta)$ is the Boolean function whose models are the assignments such that $\sum_{i = 1}^5 ix_i \geq~9$. E.g. $\mathbf{x} = (0,1,1,0,1)$ \normalsize is a model of weight $w(\mathbf{x}) = 10$. 
\end{example}
\noindent For notational clarity, given any subset $Y \subseteq X$ and denoting $\mathbf{x}\vert_Y$ the restriction of $\mathbf{x}$ to variables of $Y$, we overload $w$ so that $w(\mathbf{x}\vert_Y)$ is the sum of weights activated by variables of $Y$ set to $1$ in $\mathbf{x}$. 

\paragraph{\textbf{\textup{Decomposable {\NNF}.}}} A circuit in \emph{negation normal form} ({\NNF}) is a single output Boolean circuit whose inputs are Boolean variables and their complements, and whose gates are fanin-2 {\AND} and {\OR} gates. The \emph{size} of the circuit is the number of its gates. We say that an {\NNF} is \emph{decomposable} ({\DNNF}) if for any {\AND} gate, the two sub-circuits rooted at that gate share no input variable, i.e., if $x$ or $\neg x$ is an input of the circuit rooted at the left input of the {\AND} gate, then neither $x$ nor $\neg x$ is an input of the circuit rooted at the right input, and vice versa. A Boolean function $f$ is \emph{encoded} by a {\DNNF}~$D$ if the assignments of variables for which the output of $D$ is $1$ ($true$) are exactly the models of~$f$. 

\paragraph{\textbf{\textup{Rectangle covers.}}} Let $X$ be a finite set of Boolean variables and let $\Pi = (X_1, X_2)$ be a partition of $X$ (i.e., $X_1 \cup X_2 = X$ and $X_1 \cap X_2 = \emptyset$). A \emph{rectangle}~$r$ with respect to $\Pi$ is a Boolean function over $X$ defined as the conjunction of two functions $\rho_1$ and $\rho_2$ over $X_1$ and $X_2$ respectively. $\Pi$ is called the \emph{partition} of $r$. We say that the partition and the rectangle are \emph{balanced} when $\frac{\card{X}}{3} \leq \card{X_1} \leq \frac{2\card{X}}{3}$ (thus the same holds for $X_2$). Whenever considering a partition $(X_1,X_2)$, we use for any assignment $\mathbf{x}$ to $X$ the notations $\mathbf{x}_1 \coloneqq \mathbf{x}\vert_{X_1}$ and $\mathbf{x}_2 \coloneqq \mathbf{x}\vert_{X_2}$. And for any two assignments $\mathbf{x}_1$ and $\mathbf{x}_2$ to $X_1$ and $X_2$, we note $(\mathbf{x}_1, \mathbf{x}_2)$ the assignment to $X$ whose restrictions to $X_1$ and $X_2$ are $\mathbf{x}_1$ and $\mathbf{x}_2$. Given $f$ a Boolean function over $X$, a \emph{rectangle cover} of $f$ is a disjunction of rectangles over $X$, possibly with different partitions, equivalent to $f$. 
The \emph{size} of a rectangle cover is the number of its rectangles. A cover is called \emph{balanced} if all its rectangles are balanced.\begin{example}\label{example:rectangle_example}
Going back to Example~\ref{example:PB_constraint_example}, consider the partition $X_1 \coloneqq  \{x_1,x_3,x_4\}$, $X_2 \coloneqq  \{x_2, x_5\}$ and define $\rho_1 \coloneqq x_3 \wedge x_4$ and $\rho_2 \coloneqq x_2 \vee x_5$. Then $r \coloneqq \rho_1 \wedge \rho_2$ is a rectangle w.r.t. this partition that accepts only models of the PB-constraint from Example~\ref{example:PB_constraint_example}. Thus it can be part of a rectangle cover for this constraint.
\end{example} 
\noindent Any function $f$ has at least one balanced rectangle cover as one can create a balanced rectangle accepting exactly one chosen model of $f$. We denote by $C(f)$ the size of the smallest balanced rectangle cover of $f$. The following result from \cite{BovaCMS16} links $C(f)$ to the size of any {\DNNF} encoding $f$.\begin{theorem}\label{theorem:size_rectangle_cover_size_dnnf}
Let $D$ be a \textsf{DNNF} encoding a Boolean function $f$. Then $f$ has a balanced rectangle cover of size at most the size of $D$. 
\end{theorem}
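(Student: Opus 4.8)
The plan is to prove Theorem~\ref{theorem:size_rectangle_cover_size_dnnf} by structural induction on the {\DNNF} circuit $D$, building a balanced rectangle cover whose size is bounded by the number of gates. The key difficulty is that decomposability only guarantees that the two children of an {\AND} gate partition the variables appearing \emph{below} that gate, not the whole variable set $X$, and moreover these local partitions need not be balanced. So the naive induction produces rectangles with arbitrary, unbalanced, and mutually incompatible partitions. The real work is to transform such an unbalanced rectangle cover into a balanced one without blowing up the number of rectangles.

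First I would establish the \emph{unbalanced} version: every {\DNNF} $D$ encoding $f$ admits a rectangle cover of size at most the size of $D$, where the rectangles may have arbitrary partitions. To do this I would pick a gate and trace, for each model $\mathbf{x}$ of $f$, a "proof" subtree (a certificate) witnessing that $D$ outputs $1$ on $\mathbf{x}$: at each {\AND} gate the proof uses both children, at each {\OR} gate it selects exactly one child. Each such proof naturally induces a rectangle, because decomposability ensures that along a proof the variable set is recursively split into disjoint parts, so the accepted assignments form a product $\rho_1 \wedge \rho_2$ over a partition of the variables mentioned below the root. Summing over which {\OR}-child is chosen, one shows $f$ is the disjunction of these rectangles and that their number is controlled by the number of gates. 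Care is needed to handle variables not appearing in a given proof (they are free, contributing trivial factors to $\rho_1$ or $\rho_2$) so that every rectangle's partition is genuinely a partition of all of $X$.

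The harder part is the \emph{balancing} step: given a rectangle $r = \rho_1 \wedge \rho_2$ over a possibly unbalanced partition $(X_1, X_2)$, I would split it into a bounded number of balanced rectangles. If $|X_1| < |X|/3$, I move variables from $X_2$ into the $X_1$ side; since $\rho_2$ is an arbitrary function over $X_2$, I cannot simply relocate a variable, but I can branch on the values of a carefully chosen set $Y \subseteq X_2$ of the right size. Fixing each assignment $\mathbf{y}$ to $Y$ turns $\rho_2$ into a residual function over $X_2 \setminus Y$ and attaches the literals of $\mathbf{y}$ to the first side, yielding a balanced partition $(X_1 \cup Y, X_2 \setminus Y)$; the rectangle $r$ becomes the disjunction of these $2^{|Y|}$ balanced rectangles. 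The quantitative claim is that one can always choose $|Y|$ to be $O(1)$, in fact just large enough to cross the $|X|/3$ threshold; because each individual variable changes a side's fraction by only $1/|X|$, a single additional variable suffices to move from just below $1/3$ to inside $[1/3, 2/3]$, so the balancing blow-up is by a constant factor (indeed, essentially a factor of $2$).

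I expect the main obstacle to be the bookkeeping that keeps the total count at most the size of $D$ rather than merely polynomial in it. This forces the balancing to be done with essentially no blow-up, which in turn requires being careful about how the unbalanced partitions from different proofs relate to the $|X|/3$ boundary, and exploiting that the balancing can be absorbed into the choice of partition rather than multiplying rectangle counts. A cleaner route, which I would ultimately prefer, is to interleave balancing with the induction: at each {\AND} gate, instead of recursing blindly, choose the split point in $D$'s variable scope so that the induced top-level partition is already balanced, invoking the fact that decomposability gives a tree-like variable decomposition in which some cut yields a balanced partition. This sidesteps the post-hoc balancing entirely and directly delivers a balanced cover of size at most the number of gates, which is exactly the statement of Theorem~\ref{theorem:size_rectangle_cover_size_dnnf}.
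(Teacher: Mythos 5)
First, note that the paper does not prove Theorem~\ref{theorem:size_rectangle_cover_size_dnnf} at all: it imports it from \cite{BovaCMS16}, so the comparison below is against the known proof of that result rather than against anything in this paper. Your proposal has a genuine gap, and it sits exactly where you locate the ``real work.'' Your balancing step---branching on all assignments to a set $Y \subseteq X_2$ to migrate variables across the cut---multiplies the number of rectangles by $2^{\card{Y}}$, and your claim that $\card{Y}$ can be taken to be $O(1)$ is false. Nothing in your first step guarantees that the unbalanced rectangles are only slightly unbalanced: a {\DNNF} whose top \textsf{AND} gate splits $X$ into $\{x_1\}$ and $X\setminus\{x_1\}$ produces a rectangle with $\card{X_1}=1$, and crossing the $\card{X}/3$ threshold then requires $\card{Y} = \Omega(n)$, i.e.\ an exponential blow-up. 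The observation that one extra variable cannot overshoot the window $[\card{X}/3, 2\card{X}/3]$ says nothing about how far below the window you start. Your first step also does not deliver the claimed count: one rectangle per certificate (or per choice of \textsf{OR}-child) gives a number of rectangles equal to the number of proof trees, which is in general exponential in the size of $D$, not bounded by it.

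The missing idea---which your closing sentence gestures at but does not develop---is the separator-gate argument of \cite{BovaCMS16}: fix one certificate $T_a$ per model $a$, and walk down from the output, at each \textsf{OR} gate following the unique child in $T_a$ and at each \textsf{AND} gate following the child whose certificate mentions more variables, stopping at the first gate $v$ whose certificate-variable set has size at most $2n/3$. Since \textsf{OR} steps do not change this count and \textsf{AND} steps at most halve it, the stopping gate's variable set has size in $[n/3, 2n/3]$, so the induced partition is balanced from the outset and no post-hoc repair is needed. Grouping the models by their stopping gate yields at most one rectangle per gate; that each group is indeed contained in a rectangle $\rho_1 \wedge \rho_2$ with $\rho_1 \wedge \rho_2 \leq f$ is shown by splicing the sub-certificate below $v$ of one model into the certificate of another, using decomposability to ensure the leaves outside the spliced subtree avoid the variables below $v$. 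Without this grouping and splicing, neither the bound ``at most the size of $D$'' nor the balance condition is obtained, so as written the proposal does not establish the theorem.
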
 
\noindent Theorem~\ref{theorem:size_rectangle_cover_size_dnnf} reduces the problem of finding lower bounds on the size of {\DNNF}s encoding $f$ to that of finding lower bounds on $C(f)$.

\section{Restriction to Threshold Models of PB-Constraints}
\label{section:threshold_models}
The strategy to prove Theorem~\ref{theorem:PB_constraints_hard_for_DNNF} is to find a PB-constraint $f$ over $n$ variables such that $C(f)$ is exponential in $\sqrt{n}$ and then use Theorem~\ref{theorem:size_rectangle_cover_size_dnnf}. We first show that we can restrict our attention to covering particular models of $f$ with rectangles rather than the whole function. In this section $X$ is a set of $n$ Boolean variables and $f$ is a PB-constraint over $X$. Recall that we only consider constraints of the form $\sum_{i = 1}^n w_i x_i \geq \theta$ where the $w_i$ and $\theta$ are positive integers.

\begin{definition}\label{definition:threshold_models}
The \emph{threshold models} of $f$ are the models $\mathbf{x}$ such that $w(\mathbf{x}) = \theta$. 
\end{definition}

\noindent Threshold models should not be confused with minimal models (or minimals).

\begin{definition}\label{definition:minimal_models}
A \emph{minimal} of $f$ is a model $\mathbf{x}$ such that no $\mathbf{y} < \mathbf{x}$ is a model of $f$.
\end{definition}  

\noindent For a monotone PB-constraint, a minimal model is such that its sum of weights drops below the threshold if we remove any element from it. Any threshold model is minimal, but not all minimals are threshold models. There even exist constraints with no threshold models (e.g. take even weights and an odd threshold) while there always are minimals for satisfiable constraints.

\begin{example}\label{example:third_example}
The minimal models of the PB-constraint from Example~\ref{example:PB_constraint_example} are $(0,0,0,1,1)$, $(0,1,1,1,0)$, $(1,0,1,0,1)$ and $(0,1,1,0,1)$. The first three are threshold models.
\end{example}

\noindent Let $f^*$ be the Boolean function whose models are exactly the threshold models of $f$. In the next lemma, we prove that the smallest rectangle cover of $f^*$ has size at most $C(f)$. Thus, lower bounds on $C(f^*)$ are also lower bounds on~$C(f)$.

\begin{lemma}\label{lemma:rectangle_cover_lower_bound_threshold_models}
Let $f^*$ be the Boolean function whose models are exactly the threshold models of $f$. Then $C(f) \geq C(f^*)$.
\end{lemma}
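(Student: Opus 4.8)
The plan is to transform a smallest balanced rectangle cover of $f$ into a balanced rectangle cover of $f^*$ of no larger size; since $C(f^*)$ is the minimum size of such a cover, this immediately yields $C(f) \geq C(f^*)$. So fix a balanced rectangle cover $f = \bigvee_{j} r_j$ of size $C(f)$, where each rectangle $r_j = \rho_1^{(j)} \wedge \rho_2^{(j)}$ is taken with respect to a balanced partition $\Pi_j = (X_1^{(j)}, X_2^{(j)})$. The idea is to replace each $r_j$ by a rectangle $r_j^*$, over the \emph{same} partition $\Pi_j$, that keeps exactly the threshold models accepted by $r_j$. The disjunction of the $r_j^*$ will then cover $f^*$ and stay balanced.

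The key point is that, although the predicate $w(\mathbf{x}) = \theta$ does not factorise across $\Pi_j$ in general, it \emph{does} factorise once we restrict to assignments accepted by $r_j$. Here I would use that $r_j \leq f$, so every assignment $\mathbf{x}$ accepted by $r_j$ satisfies $w(\mathbf{x}) \geq \theta$. Assuming $r_j$ is not identically $false$, let $a_j = \min\{w(\mathbf{x}_1) : \rho_1^{(j)}(\mathbf{x}_1) = 1\}$ and $b_j = \min\{w(\mathbf{x}_2) : \rho_2^{(j)}(\mathbf{x}_2) = 1\}$ be the minimum weights achievable on each block. Pairing the two minimising assignments gives an assignment accepted by $r_j$ of weight $a_j + b_j$, so $a_j + b_j \geq \theta$. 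Now if $(\mathbf{x}_1, \mathbf{x}_2)$ is any threshold model accepted by $r_j$, then $\theta = w(\mathbf{x}_1) + w(\mathbf{x}_2) \geq a_j + b_j \geq \theta$, which forces $w(\mathbf{x}_1) = a_j$, $w(\mathbf{x}_2) = b_j$ and $a_j + b_j = \theta$. Hence, when $a_j + b_j = \theta$, the threshold models accepted by $r_j$ are exactly the $(\mathbf{x}_1, \mathbf{x}_2)$ with $\rho_1^{(j)}(\mathbf{x}_1) = 1$, $w(\mathbf{x}_1) = a_j$, $\rho_2^{(j)}(\mathbf{x}_2) = 1$ and $w(\mathbf{x}_2) = b_j$; and when $a_j + b_j > \theta$, $r_j$ accepts no threshold model at all.

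This suggests defining $r_j^*$ as the conjunction of $\rho_1^{(j)}(\mathbf{x}_1) \wedge [w(\mathbf{x}_1) = a_j]$ and $\rho_2^{(j)}(\mathbf{x}_2) \wedge [w(\mathbf{x}_2) = b_j]$ when $a_j + b_j = \theta$ (a genuine rectangle over $\Pi_j$, since each conjunct depends only on its own block of variables), and as the constant $false$ otherwise. By the previous paragraph, $r_j^*$ accepts exactly the threshold models accepted by $r_j$. I would then check that $\bigvee_j r_j^* \equiv f^*$: any threshold model of $f$ is a model of $f$, hence is accepted by some $r_j$, hence by $r_j^*$; conversely, every model of some $r_j^*$ is a threshold model of $f$. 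Since each $r_j^*$ uses the balanced partition $\Pi_j$, the resulting cover is balanced, and it has at most $C(f)$ rectangles (discarding the $false$ ones only decreases the count). Therefore $C(f^*) \leq C(f)$.

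The main obstacle is the factorisation argument in the second paragraph: it is not a priori clear that intersecting a rectangle with the non-decomposable predicate $w(\mathbf{x}) = \theta$ again yields a rectangle. The resolution is that the containment $r_j \leq f$, together with the additivity of $w$ across the partition, forces every threshold model accepted by $r_j$ to realise the minimum weight on each block simultaneously, which is precisely the decoupling that makes $r_j^*$ a rectangle over the same partition. Everything else — equivalence to $f^*$, preservation of balance, and the size bound — is routine once this is in place.
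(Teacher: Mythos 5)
Your proof is correct and follows essentially the same strategy as the paper: replace each rectangle of an optimal balanced cover of $f$ by a rectangle over the same partition that retains exactly its threshold models, which works because the weight split across the partition is forced to be the same for all threshold models of a given rectangle. The only difference is how that forcing is justified --- the paper uses an exchange argument (two threshold models with different splits would let the rectangle accept a non-model of $f$), whereas you minimise the weight on each block separately and squeeze via $r \leq f$; this is an equally valid, and arguably slightly cleaner, route to the same claim.
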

\begin{proof}
Let $r \coloneqq \rho_1 \wedge \rho_2$ be a balanced rectangle with $r \leq f$ and assume $r$ accepts some threshold models. Let $\Pi \coloneqq (X_1, X_2)$ be the partition of~$r$. We claim that there exist two integers $\theta_1$ and $\theta_2$ such that $\theta_1 + \theta_2 = \theta$ and, for any threshold model $\mathbf{x}$ accepted by~$r$, there is $w(\mathbf{x_1}) = \theta_1$ and  $w(\mathbf{x_2}) = \theta_2$. To see this, assume by contradiction that there exists another partition $\theta = \theta'_1 + \theta'_2$ of $\theta$ such that some other threshold model $\mathbf{y}$ with $w(\mathbf{y}_1) = \theta'_1$ and $w(\mathbf{y}_2) = \theta'_2$ is accepted by~$r$. Then either $w(\mathbf{x_1}) +  w(\mathbf{y_2}) < \theta$ or $w(\mathbf{y_1}) +  w(\mathbf{x_2}) < \theta$, but since $(\mathbf{x}_1,\mathbf{y}_2)$ and $(\mathbf{y}_1,\mathbf{x}_2)$ are also models of $r$, $r$ would accept a non-model of~$f$, which is forbidden. Now let $\rho^*_1$ (resp. $\rho^*_2$) be the function whose models are exactly the models of $\rho_1$ (resp. $\rho_2$) of weight $\theta_1$ (resp. $\theta_2$). Then $r^* \coloneqq \rho^*_1 \wedge \rho^*_2$ is a balanced rectangle whose models are exactly the threshold models accepted by $r$. 

Now consider a balanced rectangle cover of $f$ of size $C(f)$. For each rectangle~$r$ of the cover, if~$r$ accepts no threshold model then discard it, otherwise construct~$r^*$. The disjunction of these new rectangles is a balanced rectangle cover of $f^*$ of size at most $C(f)$. Therefore $C(f) \geq C(f^*)$.
\end{proof}

\section{Reduction to Covering Maximal Matchings of $K_{n,n}$}
\label{section:maximal_matchings}
We define the class of hard PB-constraints for Theorem~\ref{theorem:PB_constraints_hard_for_DNNF} in this section. Recall that for a hard constraint $f$, our aim is to find an exponential lower bound on $C(f)$. We will show, using Lemma~\ref{lemma:rectangle_cover_lower_bound_threshold_models}, that the problem can be reduced to that of covering  all maximal matchings of the complete $n \times n$ bipartite graph $K_{n,n}$ with rectangles. 
In this section, $X$ is a set of $n^2$ Boolean variables. For presentability reasons, assignments to $X$ are written as $n \times n$ matrices. Each variable $x_{i,j}$ has the weight $w_{i,j} \coloneqq (2^{i} + 2^{j+n})/2$. Define the matrix of weights $W \coloneqq \left(w_{i,j} : 1 \leq i,j \leq n \right)$ and the threshold $\theta \coloneqq 2^{2n}-1$. The PB-constraint $f$ for the pair $(W,\theta)$ is such that $f(\mathbf{x}) = 1$ if and only if $\mathbf{x}$ satisfies

\begin{equation}\label{equation:weird_function}
\sum_{1 \leq i,j \leq n}  \left(\frac{2^{i} + 2^{j+n}}{2}\right)x_{i,j} \geq  2^{2n}-1 \text{ .}
\end{equation}

Constraints of this form constitute the class of hard constraints of Theorem~\ref{theorem:PB_constraints_hard_for_DNNF}. One may find it easier to picture $f$ writing the weights and threshold as binary numbers of $2n$ bits. Bits of indices $1$ to $n$ form the \emph{lower part} of the number and those of indices $n+1$ to $2n$ form the \emph{upper part}. The weight $w_{i,j}$ is the binary number where the only bits set to $1$ are the $i$th bit of the lower part and the $j$th bit of the upper part. Thus when a variable $x_{i,j}$ is set to $1$, exactly one bit of value 1 is added to each part of the binary number of the sum.
\par Assignments to $X$ uniquely encode subgraphs of $K_{n,n}$. We denote $U = \{u_1, \dots, u_n\}$ the nodes of the left side and $V = \{v_1, \dots, v_n\}$ those of the right side of $K_{n,n}$. The bipartite graph encoded by $\mathbf{x}$ is such that there is an edge between the $u_i$ and $v_j$ if and only if $x_{i,j}$ is set to $1$ in $\mathbf{x}$. 

\begin{example}\label{example:bipartite_graphs_encoding}
Take $n = 4$. The assignment $
\mathbf{x} =
\footnotesize
\begin{pmatrix}
1 & 1 & 0 & 1\\
0 & 0 & 0 & 0\\
0 & 1 & 0 & 0\\
0 & 1 & 0 & 0\\
\end{pmatrix}
$ encodes 
\raisebox{-0.45\height}{
\begin{tikzpicture}
\def\xgap{0.8};
\def\ygap{0.8};
\def\s{0.4};
\node[circle,fill=black,label=left:$u_1$,scale=\s,] (u1) at (0,+0.75*\ygap) {};
\node[circle,fill=black,label=left:$u_2$,scale=\s] (u2) at (0,+0.25*\ygap) {};
\node[circle,fill=black,label=left:$u_3$,scale=\s] (u3) at (0,-0.25*\ygap) {};
\node[circle,fill=black,label=left:$u_4$,scale=\s] (u4) at (0,-0.75*\ygap) {};
\node[circle,fill=black,label=right:$v_1$,scale=\s] (v1) at (\xgap,+0.75*\ygap) {};
\node[circle,fill=black,label=right:$v_2$,scale=\s] (v2) at (\xgap,+0.25*\ygap) {};
\node[circle,fill=black,label=right:$v_3$,scale=\s] (v3) at (\xgap,-0.25*\ygap) {};
\node[circle,fill=black,label=right:$v_4$,scale=\s] (v4) at (\xgap,-0.75*\ygap) {};
\draw (u1) -- (v1);
\draw (u1) -- (v2);
\draw (u1) -- (v4);
\draw (u3) -- (v2);
\draw (u4) -- (v2);
\end{tikzpicture}
}
\end{example}

\begin{definition}\label{definition:maximal_matching_assignment}
A \emph{maximal matching assignment} (or maximal matching model) is an assignment $\mathbf{x}$ to $X$ such that 
\begin{enumerate}[topsep=0pt]
\item[$\bullet$] for any $i \in [n]$, there is exactly one $k$ such that $x_{i,k}$ is set to $1$ in $\mathbf{x}$,
\item[$\bullet$] for any $j \in [n]$, there is exactly one $k$ such that $x_{k,j}$ is set to $1$ in $\mathbf{x}$. 
\end{enumerate}
\end{definition}
As the name suggests, the maximal matching assignments are those encoding graphs whose edges form a maximal matching of $K_{n,n}$ (i.e., a maximum cardinality matching). One can also see them as encodings for permutations of $[n]$.\begin{example}\label{example:maximal_matching_assignment}
The maximal matching model $
\mathbf{x} =
\footnotesize
\begin{pmatrix}
0 & 0 & 1 & 0\\
1 & 0 & 0 & 0\\
0 & 0 & 0 & 1\\
0 & 1 & 0 & 0\\
\end{pmatrix}
$ encodes
\raisebox{-0.45\height}{
\begin{tikzpicture}
\def\xgap{0.6};
\def\ygap{0.8};
\def\s{0.4};
\node[circle,fill=black,label=left:$u_1$,scale=\s] (u1) at (0,+0.75*\ygap) {};
\node[circle,fill=black,label=left:$u_2$,scale=\s] (u2) at (0,+0.25*\ygap) {};
\node[circle,fill=black,label=left:$u_3$,scale=\s] (u3) at (0,-0.25*\ygap) {};
\node[circle,fill=black,label=left:$u_4$,scale=\s] (u4) at (0,-0.75*\ygap) {};
\node[circle,fill=black,label=right:$v_1$,scale=\s] (v1) at (\xgap,+0.75*\ygap) {};
\node[circle,fill=black,label=right:$v_2$,scale=\s] (v2) at (\xgap,+0.25*\ygap) {};
\node[circle,fill=black,label=right:$v_3$,scale=\s] (v3) at (\xgap,-0.25*\ygap) {};
\node[circle,fill=black,label=right:$v_4$,scale=\s] (v4) at (\xgap,-0.75*\ygap) {};
\draw (u1) -- (v3);
\draw (u2) -- (v1);
\draw (u3) -- (v4);
\draw (u4) -- (v2);
\end{tikzpicture}
}
\end{example}

For a given $\mathbf{x}$, define $\varl{k}{\mathbf{x}}$ by $\varl{k}{\mathbf{x}} \coloneqq \{ j \mid x_{k,j} \text{ is set to } 1 \text{ in } \mathbf{x}\}$ when $1 \leq k \leq n$ and by $\varl{k}{\mathbf{x}} \coloneqq \{ i \mid x_{i,k-n} \text{ is set to } 1 \text{ in } \mathbf{x}\}$ when $n+1 \leq k \leq 2n$. $\varl{k}{\mathbf{x}}$ stores the index of variables in $\mathbf{x}$ that directly add $1$ to the $k$th bit of $w(\mathbf{x})$. Note that a maximal matching model is an assignment $\mathbf{x}$ such that $\card{\varl{k}{\mathbf{x}}} = 1$ for all $k$. It is easy to see that maximal matching models are threshold models of $f$: seeing weights as binary numbers of $2n$ bits, for every bit of the sum the value 1 is added exactly once, so exactly the first $2n$ bits of the sum are set to 1, which gives us $\theta$. Note that not all threshold models of $f$ are maximal matching models, for instance the assignment from Example~\ref{example:bipartite_graphs_encoding} does not encode a maximal matching but one can verify that it is a threshold model. Recall that $f^*$ is the function whose models are the threshold models of $f$. In the next lemmas, we prove that lower bounds on the size of rectangle covers of the maximal matching models are lower bounds on $C(f^*)$, and a fortiori on $C(f)$.

\begin{lemma}\label{lemma:rectangle_property_of_maximal_matching}
Let $\Pi \coloneqq (X_1, X_2)$ be a partition of $X$. Let $\mathbf{x} \coloneqq (\mathbf{x}_1, \mathbf{x}_2)$ and $\mathbf{y} \coloneqq (\mathbf{y}_1, \mathbf{y}_2)$ be maximal matching assignments. If $(\mathbf{x}_1, \mathbf{y}_2)$ and $(\mathbf{y}_1, \mathbf{x}_2)$ both have weight $\theta \coloneqq 2^{2n}-1$ then both are maximal matching assignments.
\end{lemma}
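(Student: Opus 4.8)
The plan is to reduce the statement to a purely arithmetic fact about binary representations with bounded digits. The starting point is the bitwise decomposition of the weight already recorded in the text: for any assignment $\mathbf{z}$ one has $w(\mathbf{z}) = \sum_{k=1}^{2n} \card{\varl{k}{\mathbf{z}}}\, 2^{k-1}$, since each variable set to $1$ contributes $+1$ to exactly one lower bit and $+1$ to exactly one upper bit, and $\card{\varl{k}{\mathbf{z}}}$ counts the contributions to bit $k$. Because $\mathbf{x}$ and $\mathbf{y}$ are maximal matching assignments, $\card{\varl{k}{\mathbf{x}}} = \card{\varl{k}{\mathbf{y}}} = 1$ for every $k$. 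Write $\mathbf{u} \coloneqq (\mathbf{x}_1, \mathbf{y}_2)$ and $\mathbf{v} \coloneqq (\mathbf{y}_1, \mathbf{x}_2)$ for the two cross-assignments.

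First I would split each count along the partition $\Pi = (X_1, X_2)$. For each $k$, let $a_k^1$ (resp. $a_k^2$) be the number of variables of $X_1$ (resp. $X_2$) contributing to bit $k$ that are set to $1$ by $\mathbf{x}$, and define $b_k^1, b_k^2$ analogously for $\mathbf{y}$. Since $\mathbf{x}$ is a maximal matching, $a_k^1 + a_k^2 = \card{\varl{k}{\mathbf{x}}} = 1$, so $a_k^1, a_k^2 \in \{0,1\}$; likewise $b_k^1, b_k^2 \in \{0,1\}$. By construction $c_k \coloneqq \card{\varl{k}{\mathbf{u}}} = a_k^1 + b_k^2$ and $d_k \coloneqq \card{\varl{k}{\mathbf{v}}} = b_k^1 + a_k^2$, whence two facts drop out at once: each $c_k, d_k \in \{0,1,2\}$, and $c_k + d_k = (a_k^1 + a_k^2) + (b_k^1 + b_k^2) = 2$ for every $k$.

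The hypothesis $w(\mathbf{u}) = \theta$ now reads $\sum_{k=1}^{2n} c_k 2^{k-1} = 2^{2n} - 1 = \sum_{k=1}^{2n} 2^{k-1}$. The heart of the argument is to show that the bounded digits $c_k \in \{0,1,2\}$ are forced to be all $1$. Setting $e_k \coloneqq c_k - 1 \in \{-1,0,1\}$ turns this into $\sum_{k=1}^{2n} e_k 2^{k-1} = 0$, and I would prove that this signed-binary identity has only the trivial solution: if some digit were nonzero, let $m$ be the least index with $e_m \neq 0$; dividing $\sum_{k \geq m} e_k 2^{k-1} = 0$ by $2^{m-1}$ and reducing modulo $2$ forces $e_m$ to be even, contradicting $e_m \in \{-1,1\}$. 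Hence $c_k = 1$ for all $k$, and therefore $d_k = 2 - c_k = 1$ for all $k$ as well.

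Finally, $\card{\varl{k}{\mathbf{u}}} = 1$ and $\card{\varl{k}{\mathbf{v}}} = 1$ for all $k$ is exactly the characterisation of maximal matching assignments recalled before the lemma, so $\mathbf{u}$ and $\mathbf{v}$ are both maximal matchings. The only genuinely delicate step is the arithmetic uniqueness claim in the third paragraph; the rest is bookkeeping. It is worth emphasising where the hypotheses enter: the maximal-matching property of $\mathbf{x}$ and $\mathbf{y}$ is precisely what caps each digit $c_k$ at $2$, and this bounded-digit condition is what makes the all-ones representation of $2^{2n}-1$ unique. The weight condition on $\mathbf{v}$ is in fact redundant, as $c_k + d_k = 2$ makes $w(\mathbf{v}) = \theta$ an automatic consequence of $w(\mathbf{u}) = \theta$; I would still note this to explain why both hypotheses are stated.
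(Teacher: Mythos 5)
Your proof is correct and follows essentially the same route as the paper's: both rest on the bit decomposition $w(\mathbf{z}) = \sum_{k} \card{\varl{k}{\mathbf{z}}}\,2^{k-1}$, the bound $\card{\varl{k}{\cdot}} \leq 2$ inherited from the two maximal matchings, and a least-significant-bit-first argument (your minimal-index, mod-$2$ step on the signed digits $e_k$ is the paper's induction on $k$ in disguise) forcing every digit of the all-ones target $\theta = 2^{2n}-1$ to equal $1$. The only differences are cosmetic streamlinings: the identity $c_k + d_k = 2$ lets you handle $(\mathbf{y}_1,\mathbf{x}_2)$ for free and correctly shows that one of the two weight hypotheses is redundant, whereas the paper just runs the same induction symmetrically on the second cross-assignment.
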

\begin{proof}
It is sufficient to show that $\card{\varl{k}{\mathbf{x}_1,\mathbf{y}_2}} = 1$ and $\card{\varl{k}{\mathbf{y}_1,\mathbf{x}_2}} = 1$ for all $1 \leq k \leq 2n$. We prove it for $(\mathbf{x}_1,\mathbf{y}_2)$ by induction on $k$. First observe that since $\card{\varl{k}{\mathbf{x}}} = 1$ and $\card{\varl{k}{\mathbf{y}}} = 1$ for all $1 \leq k \leq 2n$, the only possibilities for $\card{\varl{k}{\mathbf{x}_1,\mathbf{y}_2}}$ are $0$, $1$ or $2$.
\begin{enumerate}[topsep=0pt]
\item[$\bullet$] For the base case $k = 1$, if $\card{\varl{1}{\mathbf{x}_1,\mathbf{y}_2}}$ is even then the first bit of $w(\mathbf{x}_1) + w(\mathbf{y}_2)$ is $0$ and the weight of $(\mathbf{x}_1,\mathbf{y}_2)$ is not $\theta$. So $\card{\varl{1}{\mathbf{x}_1,\mathbf{y}_2}} = 1$.
\item[$\bullet$]  For the general case $1 < k \leq 2n$, assume it holds that $\card{\varl{1}{\mathbf{x}_1,\mathbf{y}_2}} = \dots = \card{\varl{k-1}{\mathbf{x}_1,\mathbf{y}_2}} = 1$. So the $k$th bit of $w(\mathbf{x}_1) + w(\mathbf{y}_2)$ depends only on the parity of $\card{\varl{k}{\mathbf{x}_1,\mathbf{y}_2}}$: the $k$th bit is $0$ if $\card{\varl{k}{\mathbf{x}_1,\mathbf{y}_2}}$ is even and $1$ otherwise. $(\mathbf{x}_1,\mathbf{y}_2)$ has weight $\theta$ so $\card{\varl{k}{\mathbf{x}_1,\mathbf{y}_2}} = 1$.
\end{enumerate}
The argument applies to $(\mathbf{y}_1, \mathbf{x}_2)$ analogously. 
\end{proof}

\begin{lemma}\label{lemma:rectangle_cover_lower_bound_maximal_matching_assignments}
Let $f$ be the PB-constraint (\ref{equation:weird_function}) and let $\hat{f}$ be the function whose models are exactly the maximal matching assignments. Then $C(f) \geq C(\hat{f})$.
\end{lemma}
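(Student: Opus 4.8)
The plan is to combine Lemma~\ref{lemma:rectangle_cover_lower_bound_threshold_models} with Lemma~\ref{lemma:rectangle_property_of_maximal_matching}, following the same transformation strategy as in the proof of Lemma~\ref{lemma:rectangle_cover_lower_bound_threshold_models}. Since that lemma already yields $C(f) \geq C(f^*)$, it suffices to prove $C(f^*) \geq C(\hat{f})$; here we crucially use that maximal matching models are threshold models, so $\hat{f} \leq f^*$. I would therefore start from a balanced rectangle cover of $f^*$ of size $C(f^*)$ and transform each of its rectangles into one accepting only maximal matching models, discarding those that accept none.

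The heart of the argument is to show that the set of maximal matching models accepted by a single rectangle $r \coloneqq \rho_1 \wedge \rho_2 \leq f^*$ with partition $\Pi \coloneqq (X_1, X_2)$ factorises as $A_1 \times A_2$, where $A_1$ (resp. $A_2$) collects the $X_1$-parts (resp. $X_2$-parts) of the maximal matching models accepted by $r$. The nontrivial inclusion is that any $(\mathbf{x}_1, \mathbf{x}_2)$ with $\mathbf{x}_1 \in A_1$ and $\mathbf{x}_2 \in A_2$ is again a maximal matching model accepted by $r$. To see this I would pick maximal matching models $(\mathbf{x}_1, \mathbf{z}_2)$ and $(\mathbf{w}_1, \mathbf{x}_2)$ witnessing $\mathbf{x}_1 \in A_1$ and $\mathbf{x}_2 \in A_2$. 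Since $r$ is a rectangle we have $\rho_1(\mathbf{x}_1) = \rho_2(\mathbf{x}_2) = 1$, so $(\mathbf{x}_1, \mathbf{x}_2)$ is accepted by $r$; as $r \leq f^*$ this crossed assignment has weight $\theta$, and so does the other cross $(\mathbf{w}_1, \mathbf{z}_2)$ for the same reason. Applying Lemma~\ref{lemma:rectangle_property_of_maximal_matching} to the maximal matching models $(\mathbf{x}_1, \mathbf{z}_2)$ and $(\mathbf{w}_1, \mathbf{x}_2)$ then gives that $(\mathbf{x}_1, \mathbf{x}_2)$ is a maximal matching model, as wanted. This is the step I expect to be the crux: it is precisely where working with a cover of $f^*$ (so that all models of $r$ have weight exactly $\theta$) rather than of $f$ is what makes the hypothesis of Lemma~\ref{lemma:rectangle_property_of_maximal_matching} available.

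Once the factorisation is established, I would define $\hat{\rho}_1$ (resp. $\hat{\rho}_2$) as the function over $X_1$ (resp. $X_2$) whose models are exactly $A_1$ (resp. $A_2$), and set $\hat{r} \coloneqq \hat{\rho}_1 \wedge \hat{\rho}_2$. By the factorisation, $\hat{r}$ accepts exactly the maximal matching models accepted by $r$, and since it keeps the partition $\Pi$ it remains balanced. Replacing every rectangle $r$ of the cover of $f^*$ by $\hat{r}$, and dropping those accepting no maximal matching model, produces a disjunction of balanced rectangles of size at most $C(f^*)$. Every maximal matching model, being a model of $f^*$, is accepted by some $r$ and hence by the corresponding $\hat{r}$; conversely each $\hat{r}$ accepts only maximal matching models. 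Thus this disjunction is a balanced rectangle cover of $\hat{f}$, giving $C(\hat{f}) \leq C(f^*) \leq C(f)$.
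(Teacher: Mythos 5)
Your proof is correct and follows essentially the same route as the paper: reduce to $C(f^*) \geq C(\hat{f})$ via Lemma~\ref{lemma:rectangle_cover_lower_bound_threshold_models}, then replace each rectangle $r \leq f^*$ by the rectangle generated by the $X_1$- and $X_2$-parts of its maximal matching models, using Lemma~\ref{lemma:rectangle_property_of_maximal_matching} to show the crossed assignments remain maximal matchings. Your write-up in fact makes explicit a detail the paper leaves implicit, namely that \emph{both} crossings have weight $\theta$ (because both are accepted by $r \leq f^*$), which is needed to invoke Lemma~\ref{lemma:rectangle_property_of_maximal_matching}.
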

\begin{proof}
By Lemma~\ref{lemma:rectangle_cover_lower_bound_threshold_models}, it is sufficient to prove that $C(f^*) \geq C(\hat{f})$. We already know that $\hat{f} \leq f^*$. Let $r \coloneqq \rho_1 \wedge \rho_2$ be a balanced rectangle of partition $\Pi \coloneqq (X_1, X_2)$ with $r \leq f^*$, and assume $r$ accepts some maximal matching assignment. Let $\hat{\rho}_1$ (resp. $\hat{\rho}_2$) be the Boolean function over $X_1$ (resp. $X_2$) whose models are the $\mathbf{x}_1$ (resp. $\mathbf{x}_2$) such that there is a maximal matching assignment $(\mathbf{x}_1, \mathbf{x}_2)$ accepted by $r$. We claim that the balanced rectangle $\hat{r} \coloneqq \hat{\rho}_1 \wedge \hat{\rho}_2$ accepts exactly the maximal matching models of $r$. On the one hand, it is clear that all maximal matching models of $r$ are models of $\hat{r}$. On the other hand, all models of $\hat{r}$ are threshold models of the form $(\mathbf{x}_1, \mathbf{y}_2)$, where $(\mathbf{x}_1, \mathbf{x}_2)$ and $(\mathbf{y}_1, \mathbf{y}_2)$  encode maximal matchings, so by Lemma~\ref{lemma:rectangle_property_of_maximal_matching}, $\hat{r}$ accepts only maximal matching models~of~$r$.

Now consider a balanced rectangle cover of $f^*$ of size $C(f^*)$. For each rectangle~$r$ of the cover, if $r$ accepts no maximal matching assignment then discard it, otherwise construct $\hat{r}$. The disjunction of these new rectangles is a balanced rectangle cover of $\hat{f}$ of size at most $C(f^*)$. Therefore $C(f^*) \geq C(\hat{f})$.
\end{proof}

\section{Proof of Theorem~\ref{theorem:PB_constraints_hard_for_DNNF}}
\label{section:proof_main_result}
\setcounter{theorem}{0}
\begin{theorem}
There is a class of PB-constraints $\mathcal{F}$ such that for any constraint $f \in \mathcal{F}$ on $n^2$ variables, any {\DNNF} encoding $f$ has size $2^{\Omega(n)}$.
\end{theorem}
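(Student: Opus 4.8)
The plan is to reduce, via the results already established, to a purely combinatorial counting problem about covering permutation matrices. By Theorem~\ref{theorem:size_rectangle_cover_size_dnnf} the size of any {\DNNF} encoding $f$ is at least $C(f)$, and by Lemma~\ref{lemma:rectangle_cover_lower_bound_maximal_matching_assignments} we have $C(f) \geq C(\hat{f})$, where $\hat{f}$ is the function whose models are the maximal matching assignments. Since these are exactly the $n!$ permutation matrices of $K_{n,n}$, it suffices to prove $C(\hat{f}) = 2^{\Omega(n)}$. I would do this by a counting argument: first bound the number of permutation matrices a single balanced rectangle can cover by $n!\cdot 2^{-\Omega(n)}$, and then observe that since every one of the $n!$ permutations must be covered, at least $2^{\Omega(n)}$ rectangles are needed.

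The key structural step is to describe the set $R$ of permutation matrices accepted by a fixed balanced rectangle $r \coloneqq \rho_1 \wedge \rho_2$ with partition $(X_1,X_2)$ and $r \leq \hat f$. For any two covered permutations $\mathbf{x},\mathbf{y} \in R$, both crossovers $(\mathbf{x}_1,\mathbf{y}_2)$ and $(\mathbf{y}_1,\mathbf{x}_2)$ are accepted by $r$, hence have weight $\theta$, so by Lemma~\ref{lemma:rectangle_property_of_maximal_matching} they are themselves maximal matching assignments; thus $R$ is closed under crossover and equals the set of all permutations whose $X_1$-half lies in $A_1 \coloneqq \{\mathbf{x}_1 : \mathbf{x}\in R\}$ and whose $X_2$-half lies in $A_2$. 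From the fact that \emph{every} such combination is a permutation I would then deduce that the rows split as $[n]=R_1\sqcup R_2$ and the columns as $[n]=C_1\sqcup C_2$ with $\card{R_1}=\card{C_1}=a$, every covered permutation matching $R_1$ into $C_1$ using only cells of $X_1$ and $R_2$ into $C_2$ using only cells of $X_2$. Consequently $\card{R}=\card{A_1}\cdot\card{A_2}$ is bounded by $\mathrm{perm}(G_1)\cdot\mathrm{perm}(G_2)$, the product of the permanents of the two $0/1$ submatrices $G_1,G_2$ recording which cells of $R_1\times C_1$ (resp. $R_2\times C_2$) belong to $X_1$ (resp. $X_2$).

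It remains to bound this product by $n!\cdot2^{-\Omega(n)}$, and this is where balancedness enters and where I expect the main difficulty. Write $b \coloneqq n-a$ and assume without loss of generality $a\ge b$. Fix a small constant $\epsilon$ with $(1-\epsilon)^2>\tfrac23$. If $b\ge \epsilon n$, then $\mathrm{perm}(G_1)\,\mathrm{perm}(G_2)\le a!\,b! = n!/\binom{n}{b}\le n!\cdot 2^{-\Omega(n)}$, since $\binom{n}{b}$ is exponential for $\epsilon n \le b \le n/2$. If instead $b<\epsilon n$, then $a$ is close to $n$ and $\card{R_1\times C_1}=a^2 \ge (1-\epsilon)^2 n^2 > \tfrac23 n^2 \ge \card{X_1}$, so $G_1$ is missing at least $a^2-\card{X_1}=\Omega(n^2)$ of its cells; a standard permanent estimate (Bregman's theorem, the bound being maximised by a near-regular pattern) then gives $\mathrm{perm}(G_1)\le a!\cdot 2^{-\Omega(n)}$, so again $\card{R}\le a!\,b!\cdot2^{-\Omega(n)}\le n!\cdot2^{-\Omega(n)}$. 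In both regimes every balanced rectangle covers at most $n!\cdot2^{-\Omega(n)}$ permutations, whence $C(\hat f)\ge 2^{\Omega(n)}$. The hardest part is the extreme regime: turning the forced deficit of $\Omega(n^2)$ cells in $G_1$ into an exponential drop of its permanent below $a!$, and checking that the two regimes are exhaustive (the crossover sits at $a\approx\sqrt{2/3}\,n$). Combining the per-rectangle bound with Lemma~\ref{lemma:rectangle_cover_lower_bound_maximal_matching_assignments} and Theorem~\ref{theorem:size_rectangle_cover_size_dnnf} then yields the claimed $2^{\Omega(n)}$ lower bound on {\DNNF} size.
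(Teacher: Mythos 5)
Your proposal is correct and follows the paper's high-level strategy --- reduce via Theorem~\ref{theorem:size_rectangle_cover_size_dnnf} and Lemma~\ref{lemma:rectangle_cover_lower_bound_maximal_matching_assignments} to covering the $n!$ maximal matching assignments with balanced rectangles, bound the capacity of a single rectangle, and count --- but your per-rectangle bound is obtained by a genuinely different argument. The paper's Lemma~\ref{lemma:upper_bound_size_rectangle_for_maximal_matching_assignments} derives the same structural decomposition as you do (rows and columns split as $[n]=R_1\sqcup R_2$ and $[n]=C_1\sqcup C_2$ with $\card{R_1}=\card{C_1}=a$), but then argues that balancedness forces $a^2\leq 2n^2/3$ and $(n-a)^2\leq 2n^2/3$, hence $n(1-\sqrt{2/3})\leq a\leq n\sqrt{2/3}$, and finishes with the trivial bound $a!\,(n-a)!=n!/\binom{n}{a}\leq n!/\binom{n}{n\sqrt{2/3}}$. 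You place no constraint on $a$ and instead split into two regimes, paying with a permanent estimate when $a$ is near $n$. This extra work is not wasted: the paper's step ``$a^2\leq 2n^2/3$'' implicitly assumes that all $a^2$ cells of $R_1\times C_1$ lie in $X_1$, which does not follow from the definitions (a cell of $R_1\times C_1$ may belong to $X_2$ and simply never be set to $1$ in any model of $r$). In fact, taking $X_1$ to be the edge set of a $\frac{2n}{3}$-regular bipartite graph and $\rho_1$ the indicator of its perfect matchings gives a balanced rectangle below $\hat f$ with at least $n!\,(2/3)^n$ models by the van der Waerden--Falikman--Egorychev bound, which exceeds $n!/\binom{n}{n\sqrt{2/3}}\approx n!\,2^{-0.687n}$; so the precise constant in the paper's lemma cannot be right, while your weaker $n!\,2^{-\Omega(n)}$ bound is exactly what is true and is all the theorem needs. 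Your regime-2 analysis (deficit of $\Omega(n^2)$ cells in $G_1$ forcing $\mathrm{perm}(G_1)\leq a!\,2^{-\Omega(n)}$) is precisely the mechanism that handles this case.

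The one step you must still flesh out is that permanent estimate, since it carries the hardest regime. From Bregman's bound $\mathrm{perm}(G_1)\leq\prod_i (r_i!)^{1/r_i}$ you need the concavity of $t\mapsto\frac{1}{t}\ln(t!)$ to conclude that, for a fixed total number of ones $\sum_i r_i\leq(1-\delta)a^2$, the product is maximized at equal row sums $\bar r\leq(1-\delta)a$; Stirling then gives $(\bar r!)^{a/\bar r}\leq a!\,(1-\delta)^a\,\mathrm{poly}(a)=a!\,2^{-\Omega(n)}$. This is true and standard but should be proved or cited precisely. With that in place, your two regimes are exhaustive for any fixed $\epsilon<1-\sqrt{2/3}$, and the final counting argument yields $C(\hat f)\geq 2^{\Omega(n)}$ and hence the theorem.
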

\noindent $\mathcal{F}$ is the class of constraints defined in (\ref{equation:weird_function}). Thanks to Theorem~\ref{theorem:size_rectangle_cover_size_dnnf} and Lemma~\ref{lemma:rectangle_cover_lower_bound_maximal_matching_assignments}, the proof boils down to finding exponential lower bounds on $C(\hat{f})$, where $\hat{f}$ is the Boolean function on $n^2$ variables whose models encode exactly the maximal matchings of $K_{n,n}$ (or equivalently, the permutations of $[n]$). $\hat{f}$ has $n!$ models. The idea is now to prove that rectangles covering $\hat{f}$ must be relatively small, so that covering the whole function requires many of them.

\begin{lemma}\label{lemma:upper_bound_size_rectangle_for_maximal_matching_assignments}
Let $\Pi = (X_1, X_2)$ be a balanced partition of $X$. Let $r$ be a rectangle with respect to $\Pi$ with $r \leq \hat{f}$. Then $\card{\inv{r}(1)} \leq  n!/\binom{n}{n\sqrt{2/3}}$.
\end{lemma}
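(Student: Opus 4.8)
The plan is to exploit the product structure of a rectangle together with the combinatorial rigidity of maximal matchings, and then to feed the balancedness of $\Pi$ into a counting argument. Write $r = \rho_1 \wedge \rho_2$ and set $A_1 \coloneqq \inv{\rho_1}(1)$ and $A_2 \coloneqq \inv{\rho_2}(1)$. Since the models of $r$ are exactly the pairs $(\mathbf x_1, \mathbf x_2)$ with $\mathbf x_1 \in A_1$ and $\mathbf x_2 \in A_2$, and distinct maximal matchings yield distinct pairs, we have $\card{\inv r(1)} = \card{A_1}\cdot\card{A_2}$, so it suffices to bound this product.

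First I would pin down which rows and columns are resolved on the $X_1$ side. For a maximal matching $\mathbf x$ accepted by $r$ and a row index $i$, exactly one variable of row $i$ is set to $1$, and that variable lies either in $X_1$ or in $X_2$; call the row an $X_1$-row or an $X_2$-row accordingly, and likewise for columns. Using the product structure (if $(\mathbf x_1,\mathbf x_2)$ and $(\mathbf y_1,\mathbf y_2)$ are accepted then the hybrids $(\mathbf x_1,\mathbf y_2)$ and $(\mathbf y_1,\mathbf x_2)$ are again models of $r$, hence maximal matchings since $r \leq \hat f$, which is exactly the content of Lemma~\ref{lemma:rectangle_property_of_maximal_matching}), I would show that the split of $[n]$ into $X_1$-rows and $X_2$-rows is the same for every matching accepted by $r$: if two accepted matchings disagreed on the status of row $i$, a hybrid would place either zero or two ones in row $i$ and fail to be a matching. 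Let $I$ and $J$ be the common sets of $X_1$-rows and $X_1$-columns, and put $k \coloneqq \card I$; counting the ones lying in $X_1$ both row-wise and column-wise gives $\card J = k$ as well. Every accepted matching then restricts to a bijection $I \to J$ realised inside $X_1$ and a bijection $([n]\setminus I) \to ([n]\setminus J)$ realised inside $X_2$, and each element of $A_1$ (resp.\ $A_2$) is determined by such a bijection. Hence $\card{A_1} \le k!$ and $\card{A_2} \le (n-k)!$, so $\card{\inv r(1)} \le k!\,(n-k)! = n!/\binom{n}{k}$.

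It then remains to show that $k$ cannot be too far from $n/2$, and this is where balancedness enters and where I expect the real difficulty to lie. The cells matched on the $X_1$ side all sit in the $k \times k$ block $I \times J$; I would argue that this whole block must be contained in $X_1$ (a cell of $I \times J$ lying in $X_2$ would let a hybrid route an $X_1$-row through an $X_2$-cell), so that $k^2 \le \card{X_1} \le \tfrac23 n^2$ and therefore $k \le n\sqrt{2/3}$. Running the symmetric argument on the block $([n]\setminus I)\times([n]\setminus J)$ against $X_2$ gives $(n-k)^2 \le \card{X_2} \le \tfrac23 n^2$, i.e.\ $n - k \le n\sqrt{2/3}$, so that $n(1-\sqrt{2/3}) \le k \le n\sqrt{2/3}$. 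Finally, on this range of $k$ unimodality and symmetry of $\binom{n}{k}$ give $\binom{n}{k} \ge \binom{n}{n\sqrt{2/3}}$ throughout, which yields $\card{\inv r(1)} \le n!/\binom{n}{n\sqrt{2/3}}$ as claimed.

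The main obstacle is precisely the block-containment step. The exchange property constrains only the cells actually \emph{used} by accepted matchings, whereas $I \times J$ (and the complementary block) may contain cells that are never matched; such cells are free to sit on either side of the partition, so the clean inequality $k^2 \le \card{X_1}$ is not immediate and must be argued with care. Everything surrounding it — the product decomposition, the fixed-split structure, and the final binomial estimate — is routine; getting this containment (or a sufficient quantitative substitute for it) right is the crux of the proof.
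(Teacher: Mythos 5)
Up to the step you flag, your argument is the paper's own, almost line for line: the paper defines vertex sets $U_1,V_1$ (your $I,J$) and $U_2,V_2$ from the cells actually used by models of $\rho_1$ and $\rho_2$, shows via the hybrid argument that they partition the two sides of $K_{n,n}$ with $\card{U_1}=\card{V_1}=k$, deduces $\card{\inv{r}(1)}\le k!\,(n-k)!=n!/\binom{n}{k}$, and finishes with the same monotonicity estimate on $\binom{n}{k}$ over $[\![n(1-\sqrt{2/3}),n\sqrt{2/3}]\!]$. The one place you stop short is the one place the paper does not: it asserts that ``up to $k^2$ edges may be used to build matchings between $U_1$ and $V_1$'' and concludes $k^2\le 2n^2/3$ from balancedness, which is exactly the containment $I\times J\subseteq X_1$ that you could not justify. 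So there is no missing idea in the paper for you to recover here.

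Your worry is in fact well founded: that containment is false, and no quantitative substitute rescues the stated constant. Take $n$ divisible by $3$, let $X_2:=\{x_{i,j}:(j-i)\bmod n<n/3\}$, so $\card{X_1}=2n^2/3$ and $\Pi$ is balanced; let $\rho_2$ accept only the all-zero assignment to $X_2$ and let $\rho_1$ accept exactly the restrictions to $X_1$ of maximal matchings avoiding $X_2$. Then $r:=\rho_1\wedge\rho_2\le\hat f$, every accepted matching lies entirely in $X_1$, so $I=J=[n]$, $k=n$, and $I\times J\not\subseteq X_1$. Moreover $\inv{r}(1)$ is the set of perfect matchings of a $(2n/3)$-regular bipartite graph, which by the Van der Waerden permanent bound has at least $n!\,(2/3)^n=n!\,2^{-n\log_2(3/2)}$ elements, while $n!/\binom{n}{n\sqrt{2/3}}=n!\,2^{-nH(\sqrt{2/3})+o(n)}$ with $H(\sqrt{2/3})\approx 0.69>\log_2(3/2)\approx 0.58$; so the lemma's bound is exceeded for large $n$. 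The root cause is the one you name: the exchange property only constrains cells that some accepted matching uses, whereas balancedness counts all cells, and never-used cells can pad either block of the partition. The statement (and Theorem~1) survives with a weaker bound of the form $\card{\inv{r}(1)}\le n!\,2^{-\delta n}$: keep the $k!\,(n-k)!$ estimate when $k$ and $n-k$ are both at most $n\sqrt{2/3}$, and when, say, $k>n\sqrt{2/3}$, bound $\card{\inv{\rho_1}(1)}$ instead by Bregman's permanent inequality applied to the graph $(I,J,X_1\cap(I\times J))$, which has at most $2n^2/3$ edges. That is a genuinely new ingredient, not a detail to be filled in, so as written both your proof and the lemma's target constant are incomplete.
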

\noindent The function $\hat{f}$ has already been studied extensively in the literature, often under the name $\text{PERM}_n$ (for \emph{permutations on $[n]$}), see for instance Chapter 4 of \cite{Wegener00} or section 6.2 of \cite{MengelW19} where a statement similar to Lemma~\ref{lemma:upper_bound_size_rectangle_for_maximal_matching_assignments} is established. With Lemma~\ref{lemma:upper_bound_size_rectangle_for_maximal_matching_assignments} we can give the proof of Theorem~\ref{theorem:PB_constraints_hard_for_DNNF}.
\begin{proof}[Theorem~\ref{theorem:PB_constraints_hard_for_DNNF}]
Let $\bigvee_{k = 1}^{C(\hat{f})} r_k$ be a balanced rectangle cover of $\hat{f}$. We have $\sum_{k = 1}^{C(\hat{f})} \card{\inv{r_k}(1)} \geq \card{\inv{\hat{f}}(1)} = n!$. Lemma~\ref{lemma:upper_bound_size_rectangle_for_maximal_matching_assignments} gives us $(C(\hat{f})n!)/\binom{n}{n\sqrt{2/3}} \geq n!$, thus
\begin{equation*}
\begin{aligned}
C(\hat{f}) \geq \binom{n}{n\sqrt{2/3}} \geq \left(\frac{n}{n\sqrt{2/3}}\right)^{n\sqrt{2/3}} 
= \left(\frac{3}{2}\right)^{n\frac{\sqrt{2/3}}{2}} \geq 2^{n\frac{\sqrt{2/3}}{4}} = 2^{\Omega(n)}
\end{aligned}
\end{equation*}
where we have used $\binom{a}{b} \geq (a/b)^b$ and $3/2 \geq \sqrt{2}$. Using Lemma~\ref{lemma:rectangle_cover_lower_bound_maximal_matching_assignments} we get that $C(f) \geq C(\hat{f}) \geq 2^{\Omega(n)}$. Theorem~\ref{theorem:size_rectangle_cover_size_dnnf} allows us to conclude.
\end{proof}

\begin{figure}[t]
\hfill
\begin{subfigure}[t]{0.4\textwidth}
\centering
\begin{tikzpicture}
\def\xgap{0.8};
\def\ygap{-0.55};
\def\offsetx{0}
\def\offsety{0}
\def\s{0.4};
\node[circle,fill=black,label=left:$u_1$,scale=\s] (u1) at (\offsetx,0*\ygap+\offsety) {};
\node[circle,fill=black,label=left:$u_2$,scale=\s] (u2) at (\offsetx,1*\ygap+\offsety) {};
\node[circle,fill=black,label=left:$u_3$,scale=\s] (u3) at (\offsetx,2*\ygap+\offsety) {};
\node[circle,fill=black,label=left:$u_4$,scale=\s] (u4) at (\offsetx,3*\ygap+\offsety) {};
\node[circle,fill=black,label=right:$v_1$,scale=\s] (v1) at (\xgap+\offsetx,0*\ygap+\offsety) {};
\node[circle,fill=black,label=right:$v_2$,scale=\s] (v2) at (\xgap+\offsetx,1*\ygap+\offsety) {};
\node[circle,fill=black,label=right:$v_3$,scale=\s] (v3) at (\xgap+\offsetx,2*\ygap+\offsety) {};
\node[circle,fill=black,label=right:$v_4$,scale=\s] (v4) at (\xgap+\offsetx,3*\ygap+\offsety) {};
\draw[densely dotted] (u1) -- (v1);
\draw (u1) -- (v2);
\draw (u1) -- (v3);
\draw[densely dotted] (u1) -- (v4);

\draw[densely dotted] (u2) -- (v1);
\draw[densely dotted] (u2) -- (v2);
\draw (u2) -- (v3);
\draw (u2) -- (v4);

\draw (u3) -- (v1);
\draw (u3) -- (v2);
\draw[densely dotted] (u3) -- (v3);
\draw (u3) -- (v4);

\draw (u4) -- (v1);
\draw[densely dotted] (u4) -- (v2);
\draw[densely dotted] (u4) -- (v3);
\draw[densely dotted] (u4) -- (v4);

\node (white) at (\xgap+\offsetx,3*\ygap-0.775) {};
\end{tikzpicture}
\caption{Balanced partition $\Pi$ of $K_{4,4}$} \label{figure:balanced_partition_Knn}
\end{subfigure}\hfill
\begin{subfigure}[t]{0.6\textwidth}
\centering
\begin{tikzpicture}
\def\xgap{0.8};
\def\ygap{-0.55};
\def\s{0.4};
\node[circle,fill=black,label=left:$u_1$,scale=\s] (u1) at (0,0*\ygap) {};
\node[circle,fill=black,label=left:$u_2$,scale=\s] (u2) at (0,1*\ygap) {};
\node[circle,fill=black,label=left:$u_3$,scale=\s] (u3) at (0,2*\ygap) {};
\node[circle,fill=black,label=left:$u_4$,scale=\s] (u4) at (0,3*\ygap) {};
\node[circle,fill=black,label=right:$v_1$,scale=\s] (v1) at (\xgap,0*\ygap) {};
\node[circle,fill=black,label=right:$v_2$,scale=\s] (v2) at (\xgap,1*\ygap) {};
\node[circle,fill=black,label=right:$v_3$,scale=\s] (v3) at (\xgap,2*\ygap) {};
\node[circle,fill=black,label=right:$v_4$,scale=\s] (v4) at (\xgap,3*\ygap) {};
\draw (u1) -- (v3);
\draw (u2) -- (v1);
\draw (u3) -- (v4);
\draw (u4) -- (v2);

\draw[-stealth] (\xgap+0.75,1*\ygap-0.1) -- (\xgap+2.5,-0.15) node[midway,above] {$\mathbf{x}_1$} ;
\draw[-stealth] (\xgap+0.75,2*\ygap+0.1) -- (\xgap+2.5,-0.15-1.5)  node[midway,below] {$\mathbf{x}_2$} ;

\def\xgap{0.8};
\def\ygap{-0.3};
\def\offsetx{4}
\def\offsety{0.5}
\def\s{0.4};
\node[circle,fill=black,label=left:$ $,scale=\s] (u1) at (\offsetx,0*\ygap+\offsety) {};
\node[circle,fill=black,label=left:$ $,scale=\s] (u2) at (\offsetx,1*\ygap+\offsety) {};
\node[circle,fill=black,label=left:$ $,scale=\s] (u3) at (\offsetx,2*\ygap+\offsety) {};
\node[circle,fill=black,label=left:$ $,scale=\s] (u4) at (\offsetx,3*\ygap+\offsety) {};
\node[circle,fill=black,label=right:$ $,scale=\s] (v1) at (\xgap+\offsetx,0*\ygap+\offsety) {};
\node[circle,fill=black,label=right:$ $,scale=\s] (v2) at (\xgap+\offsetx,1*\ygap+\offsety) {};
\node[circle,fill=black,label=right:$ $,scale=\s] (v3) at (\xgap+\offsetx,2*\ygap+\offsety) {};
\node[circle,fill=black,label=right:$ $,scale=\s] (v4) at (\xgap+\offsetx,3*\ygap+\offsety) {};
\draw (u1) -- (v3);
\draw (u3) -- (v4);

\draw let \p{A}=(u1), \p{B}=(u3) in [densely dotted]  plot [smooth cycle] coordinates {(\x{A} + 3,\y{A}) (\x{A}-1, \y{A} + 3) (\x{A}-8,\y{A}*0.5+\y{B}*0.5) (\x{B}-1, \y{B}-3) (\x{B} + 3,\y{B}) (\x{A}-3,\y{A}*0.5+\y{B}*0.5) };
\node[scale=0.8] (n) at (\xgap+\offsetx-1.3,1*\ygap+\offsety) {$U_1$};

\draw[densely dotted] (\xgap+\offsetx,2.5*\ygap+\offsety) ellipse (0.15 and 0.32);
\node[scale=0.8] (n) at (\xgap+\offsetx+0.4,2.5*\ygap+\offsety) {$V_1$};

\def\xgap{0.8};
\def\ygap{-0.3};
\def\offsetx{4}
\def\offsety{-1.49}
\def\s{0.4};
\node[circle,fill=black,label=left:$ $,scale=\s] (u1) at (\offsetx,0*\ygap+\offsety) {};
\node[circle,fill=black,label=left:$ $,scale=\s] (u2) at (\offsetx,1*\ygap+\offsety) {};
\node[circle,fill=black,label=left:$ $,scale=\s] (u3) at (\offsetx,2*\ygap+\offsety) {};
\node[circle,fill=black,label=left:$ $,scale=\s] (u4) at (\offsetx,3*\ygap+\offsety) {};
\node[circle,fill=black,label=right:$ $,scale=\s] (v1) at (\xgap+\offsetx,0*\ygap+\offsety) {};
\node[circle,fill=black,label=right:$ $,scale=\s] (v2) at (\xgap+\offsetx,1*\ygap+\offsety) {};
\node[circle,fill=black,label=right:$ $,scale=\s] (v3) at (\xgap+\offsetx,2*\ygap+\offsety) {};
\node[circle,fill=black,label=right:$ $,scale=\s] (v4) at (\xgap+\offsetx,3*\ygap+\offsety) {};
\draw (u2) -- (v1);
\draw (u4) -- (v2);

\draw let \p{A}=(u2), \p{B}=(u4) in [densely dotted]  plot [smooth cycle] coordinates {(\x{A} + 3,\y{A}) (\x{A}-1, \y{A} + 3) (\x{A}-8,\y{A}*0.5+\y{B}*0.5) (\x{B}-1, \y{B}-3) (\x{B} + 3,\y{B}) (\x{A}-3,\y{A}*0.5+\y{B}*0.5) };
\node[scale=0.8] (n) at (\xgap+\offsetx-1.3,2*\ygap+\offsety) {$U_2$};

\draw[densely dotted] (\xgap+\offsetx,0.5*\ygap+\offsety) ellipse (0.15 and 0.32);
\node[scale=0.8] (n) at (\xgap+\offsetx+0.4,0.5*\ygap+\offsety) {$V_2$};
\end{tikzpicture}
\caption{Partition of a maximal matching w.r.t. $\Pi$} \label{figure:balanced_partition_matching}
\end{subfigure}
\caption{Partition of maximal matching}\label{figure:fig}
\end{figure}
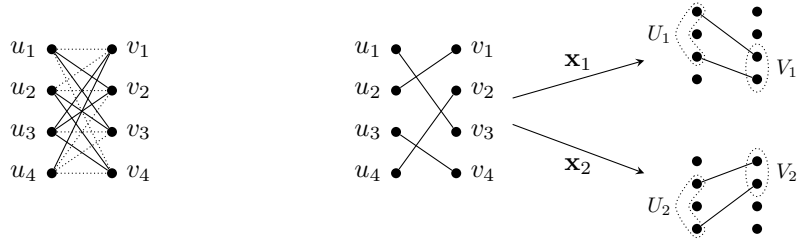

\noindent All that is left is to prove Lemma~\ref{lemma:upper_bound_size_rectangle_for_maximal_matching_assignments}.
\begin{proof}[Lemma~\ref{lemma:upper_bound_size_rectangle_for_maximal_matching_assignments}]

Let $r \coloneqq \rho_1 \wedge \rho_2$ and $\Pi \coloneqq (X_1,X_2)$. Recall that $U \coloneqq \{u_1, \dots, u_n\}$ and $V \coloneqq \{v_1, \dots, v_n\}$ are the nodes from the left and right part of $K_{n,n}$ respectively. Define $U_1 \coloneqq \{u_i \mid \text{there exists } x_{i,l} \in X_1 \text{ such that  a model of } \rho_1 \text{ has } x_{i,l} \text{ set to } 1\}$ and $V_1 \coloneqq \{v_j \mid \text{there exists } x_{l,j} \in X_1 \text{ such that  a model of } \rho_1 \text{ has } x_{l,j} \text{ set to } 1\}$. Define $U_2$ and $V_2$ analogously (this time using $X_2$ and $\rho_2$). Figure~\ref{figure:fig} illustrates the construction of these sets: Figure~\ref{figure:balanced_partition_Knn} shows a partition $\Pi$ of the edges of $K_{4,4}$ (full edges in $X_1$, dotted edges in $X_2$) and Figure~\ref{figure:balanced_partition_matching} shows the contribution of a model of $r$ to $U_1$, $V_1$, $U_2$, and $V_2$ after partition according to $\Pi$.

Models of $\rho_1$ are clearly matchings of $K_{n,n}$. Actually they are matchings between $U_1$ and $V_1$ by construction of these sets. We claim that they are maximal. To verify this, observe that $U_1 \cap U_2 = \emptyset$ and $V_1 \cap V_2 = \emptyset$ since otherwise $r$ has a model that is not a matching. Thus if $\rho_1$ were to accept a non-maximal matching between $U_1$ and $V_1$ then $r$ would accept a non-maximal matching between $U$ and $V$. So $\rho_1$ accepts only maximal matchings between $U_1$ and $V_1$, consequently $\card{U_1} = \card{V_1}$. The argument applies symmetrically for $V_2$ and $U_2$. We note $k \coloneqq \card{U_1}$.
It stands that $U_1 \cup U_2 = U$ and $V_1 \cup V_2 = V$ as otherwise $r$ accepts matchings that are not maximal. So $\card{U_2} = \card{V_2} = n-k$. We now have $\card{\inv{\rho_1}(1)} \leq k!$ and $\card{\inv{\rho_2}(1)} \leq (n-k)!$, leading to $\card{\inv{r}(1)} \leq k!(n-k)! = n!/\binom{n}{k}$. 

Up to $k^2$ edges may be used to build matchings between $U_1$ and $V_1$. Since~$r$ is balanced we obtain $k^2 \leq 2n^2/3$. Applying the same argument to $U_2$ and $V_2$ gives us $(n-k)^2 \leq 2n^2/3$, so $n(1-\sqrt{2/3}) \leq k \leq n\sqrt{2/3}$. Finally, the function $k \mapsto n!/\binom{n}{k}$, when restricted to some interval $[\![n(1-\alpha), \alpha n]\!]$, reaches its maximum at $k = \alpha n$, hence the upper bound $\card{\inv{r}(1)} \leq  n!/\binom{n}{n\sqrt{2/3}}$. 
\end{proof}

\section*{Acknowledgments}
This work has been partly supported by the PING/ACK project of the French National Agency for Research (ANR-18-CE40-0011).

\newpage
\bibliography{biblio}
\bibliographystyle{abbrv}

\end{document}